\documentclass[11pt]{article}

\usepackage{acl}
\usepackage{amsmath}
\usepackage{caption}
\usepackage{subcaption}
\usepackage{graphicx}
\usepackage{amsthm}
\usepackage{booktabs}
\usepackage[ruled,vlined]{algorithm2e}
\usepackage{float} 
 \usepackage{microtype}
 \newtheorem{theorem}{Theorem}[section]      % Theorems numbered by section
\newtheorem{definition}[theorem]{Definition}

\newtheorem{remark}[theorem]{Remark}

\usepackage[english]{babel} % English as the main language
\title{Integrating Unstructured Text into Causal Inference: Empirical Evidence from Real Data}

\author{
Boning Zhou$^{1}$\thanks{\textbf{Author details.}
Boning Zhou (Amazon; work completed while at Amazon), \texttt{boningzhou0731@gmail.com}.
Ziyu Wang (Amazon), \texttt{ziywan@amazon.com}.
Han Hong (Stanford University; Amazon Scholar), \texttt{hanhon@amazon.com}.
Haoqi Hu (Amazon), \texttt{hthaoqi@amazon.com}.}
\quad
Ziyu Wang$^{1}$ \quad
Han Hong$^{1,2}$ \quad
Haoqi Hu$^{1}$ \\
\\
$^{1}$Amazon \qquad
$^{2}$Stanford University
}

\begin{document}

\maketitle
\begin{abstract}
Causal inference, a critical tool for informing business decisions, traditionally relies heavily on structured data. However, in many real-world scenarios, such data can be incomplete or unavailable. This paper presents a framework that leverages transformer-based language models to perform causal inference using unstructured text. We demonstrate the effectiveness of our framework by comparing causal estimates derived from unstructured text against those obtained from structured data across population, group, and individual levels. Our findings show consistent results between the two approaches, validating the potential of unstructured text in causal inference tasks. Our approach extends the applicability of causal inference methods to scenarios where only textual data is available, enabling data-driven business decision-making when structured tabular data is scarce.
\end{abstract}

\section{Introduction}

In industry, companies are interested in measuring the effectiveness of the launched campaign in improving revenue or how specific occurrence may harm profitability. Consider an e-commerce enterprise that launches a program to prevent counterfeits. After the launch, the company wants to evaluate the average reduction in counterfeits for products enrolled in the program, the variation in counterfeit reduction across different product groups (such as electronics, furniture, and jewelry), and the program's impact on individual enrolled products. These measurements and analysis could help assess the effectiveness of the program and identify potential areas for improvements.  

The previous mentioned problem can be regarded as event-based causal analysis which is initially formulated by \citet{neyman1923applications} and \citet{rubin1974estimating}. In this scenario, we consider a group of individuals denoted by $\{(Y_i, T_i, Z_i)\}_{i \in N}$ represents the observable data consisting copy of the random vector $(Y,T, Z)$. Each $Z_i$ represents the information characterizing individual $i$. Each individual $i$ may receive the treatment (i.e., enroll in the campaign), indicated by $T_i = 1$, or remain untreated (i.e., not enroll in the program), indicated by $T_i = 0$. The variable $Y_i$ denotes the observed outcome for individual $i$. By the nature of causal inference, for any individual $i$, we cannot observe the outcome both with and without treatment.
We define $g_0(Z_i) \equiv E[Y \mid Z = Z_i, T = 0]$ as the potential(expected) outcome for individual $i$ if untreated, and $g_1(Z_i) \equiv E[Y \mid Z = Z_i, T = 1]$ as the potential(expected) outcome if treated. Using the notation $(Z, Y, T)$, we define several treatment effect estimands commonly used in causal inference:

\begin{itemize}
    \item \textbf{Average Treatment Effect (ATE):}
    \[
    \theta_{ATE} \equiv E[g_1(Z) - g_0(Z)]
    \]
    
    \item \textbf{Average Treatment Effect on the Treated (ATET):}
    \[
    \theta_{ATET} \equiv E[g_1(Z) - g_0(Z) \mid T = 1]
    \]
    
    \item \textbf{Group Average Treatment Effect (GATE)}, for a given group $G$:
    \[
    \theta_{G} \equiv E[g_1(Z) - g_0(Z) \mid Z \in G]
    \]
    
    \item \textbf{Conditional Average Treatment Effect (CATE)} for a given individual $Z_i$:
    \[
    \theta(Z_i) \equiv g_1(Z_i) - g_0(Z_i)
    \]
\end{itemize}
To estimate these estimands, a key step is to train two models, $\tilde{g}_0(\cdot)$ and $\tilde{g}_1(\cdot)$, to predict the potential outcomes without and with treatment, respectively, given any covariate vector $Z_i \in Z$. This approach, known as the \textbf{T-learner} (Two-learner) method, was introduced by \citet{kunzel2019metalearners}. In industry applications, tabular data comprising categorical and numerical features are typically used to describe each individual $i$. When $Z_i$ is primarily structured in this form, models such as Random Forests, Boosted Trees, Neural Networks, and Penalized Regression are commonly employed to estimate $\tilde{g}_0(\cdot)$ and $\tilde{g}_1(\cdot)$; see \citet{athey2019mlguide} and \citet{nie2021quasi} for recent overviews.

However, in many real-world scenarios, tabular data are unavailable. For instance, in the counterfeit prevention campaign example, $Z_i$ may include features such as monthly sales, customer review counts, return rates, and counterfeit counts—capturing a product’s popularity and susceptibility to counterfeiting. Yet for newly listed products, such historical data are often unavailable, making causal estimation based on tabular inputs infeasible and requiring a delay until sufficient data are collected for analysis.
On the other hand, newly listed products usually have detailed text descriptions available from the first day, which are easy to access. As a result, if models $\tilde{g}_0(\cdot)$ and $\tilde{g}_1(\cdot)$ can take unstructured text as input, causal analysis can be performed promptly, enabling quicker and more informed decisions about continuing the campaign. In this paper, we leverage a pre-trained decoder-based large language model as the prediction backbone to train $\tilde{g}_0(\cdot)$ and $\tilde{g}_1(\cdot)$, which take unstructured text as input. 

Recent literature has explored the use of transformer-based models for causal inference (see \citet{kunzel2019metalearners}, \citet{maiya2021causalnlp}, \citet{zhang2022exploring}). These studies primarily rely on fully or semi-synthetic datasets to investigate how to design transformer-based models that minimize causal inference error. Consequently, they require knowledge of the ground truth causal effect, which entails making artificial assumptions about the functional relationship between potential outcomes and covariates. Due to the limitations of synthetic data, the input $Z_i$ in these works typically consists of structured features, such as numeric or categorical variables. Even when text features are included, they usually follow a regular format or exhibit a simplified numerical relationship with the generated potential outcomes.

In contrast, our paper aims to offer an empirical perspective on a fundamental question: can language models leverage unstructured text data to perform "meaningful" causal inference using real-world data? A key challenge in this exploration is that, with real-world data, the ground truth of causal effects is inherently unobservable, making standard evaluation metrics unavailable.
To address this, we take an alternative approach. For each individual $i$, we prepare both an unstructured textual description $Z_i^{text}$ and a structured representation $Z_i^{num}$ composed of numeric and categorical features. We perform causal inference using both types of input and compare the results obtained from unstructured text to those derived from structured data. If the causal estimates from unstructured text align closely with those from structured inputs—which are considered best practice in industry—we argue that meaningful causal inference can indeed be achieved using unstructured data.

The remainder of the paper is organized as follows: In Section~\ref{sec: causal estimation}, we introduce a framework for estimating treatment effects that accommodates both structured and unstructured data. Section~\ref{sec: experiment} describes our dataset and experimental design. In Section~\ref{sec: result}, we present our results, and in the final section, Section~\ref{sec: conclu}, we summarize our findings and discuss potential directions for future work.

\section{Causal Estimation}
\label{sec: causal estimation}
\subsection{Model}
We study causal inference in scenario when treatment effects are fully heterogeneous and the treatment variable is binary (\citet{chernozhukov2018double}, \citet{semenova2021debiased}). $\{(Y_i, T_i, Z_i)\}_{i \in N}$ represents the observable data consisting of i.i.d copy of the random vector $(Y,T, Z)$ having probability law $P$. 
\begin{align}
    Y &= g(T,Z) + U,\; E_{P}[U|Z,T] = 0 \label{eq: irm 1}\\
    T &= \mu(Z) + V,\; E_{P}[V|Z] = 0 \label{eq: irm 2}
\end{align}
We refer to $(Y, T, Z)$ satisfying equations \ref{eq: irm 1} and \ref{eq: irm 2} as the \textbf{Interactive Regression Model} (IRM). For notational convenience, we define $g_0(Z) \equiv g(0, Z)$ and $g_1(Z) \equiv g(1, Z)$. Additionally, standard assumption \textbf{unconfoundedness} (\citet{rosenbaum1983central}) is made:
\[
(Y_1 \equiv g_1(Z) + U, Y_0 \equiv g_0(Z) + U) \perp T|Z
\]
\subsection{Estimation}
Given trained prediction models $\tilde{g}_0(\cdot)$, $\tilde{g}_1(\cdot)$ and $\tilde{\mu}(\cdot)$ for unknown $g_0(\cdot)$, $g_1(\cdot)$ and $\mu(\cdot)$. We construct the corresponding estimators for $\theta_{ATE}$, $\theta_{ATET}$ and $\theta_{G}$ following the work by \citet{chernozhukov2018double}:  
\begin{align*} 
&\tilde{\theta}_{ATE} = E_N[\tilde{g}_1(Z_i) - \tilde{g}_0(Z_i) + (Y_i - \tilde{g}_{T_i}(X))\tilde{H}_i ]\\
&\tilde{\theta}_{ATET} = E_N[\frac{D_i}{E_N[T_m]}(\tilde{g}_1(Z_i) - \tilde{g}_0(Z_i)) + \\
&\frac{\tilde{\mu}(Z_i)}{E_{N}[T_m]}(Y_i - \tilde{g}_{T_i}(X))\tilde{H}_i]\\
&\tilde{\theta}_G = E_N[\frac{\mathrm{1}(Z_i \in G)}{E_N[\mathrm{1}(Z_m \in G)]}(\tilde{g}_1(Z_i) - \tilde{g}_0(Z_i) +\\
&(Y_i - \tilde{g}_{T_i}(X))\tilde{H}_i)]
\end{align*}
where
\[
\tilde{H}_i \equiv \frac{T_i}{\tilde{\mu}(Z_i)} - \frac{1 - T_i}{1 - \tilde{\mu}(Z_i)}
\]
$E_N[\cdot]$ denotes the sample average. In particular, $E_N[T_m] \in (0,1)$ represents the proportion of treated individuals in the observed sample, and $E_N[\mathrm{1}(Z_m \in G)] \in (0,1)$ denotes the proportion of individuals in group $G$ within the observed sample. 

For the estimator of $\theta(Z_i)$, we adopt agnostic approach as \citet{chernozhukov2018generic} and estimate the following estimands for each $Z_i$: 
\[
E[\theta(Z_i)|\tilde{\theta}(Z_i)]
\]
Where $\tilde{\theta}(Z_i) \equiv \tilde{g}_1(Z_i) - \tilde{g}_0(Z_i)$. These estimands represent the best linear projection (\textbf{BLP}) of the true \textbf{CATE} $\theta(Z_i)$ onto the model’s naive prediction $\tilde{\theta}(Z_i)$. \citet{chernozhukov2018generic} propose an estimator for the estimands under a randomized controlled trial (RCT) setting, where $\mu(\cdot)$ is assumed to be known. However, in industry applications, RCTs are often difficult to implement, and we typically rely on observational data, where $\mu(\cdot)$ is unknown. To address this challenge, we propose a modified estimator: For the observational data ${(Y_i, T_i, Z_i)}_{i \in N}$, given $\tilde{\eta} \equiv (\tilde{g}_0(\cdot), \tilde{g}_1(\cdot), \tilde{\mu}(\cdot))$, we construct the doubly robust label for each $i \in N$ as
\begin{equation}
c_{\tilde{\eta}}(Z_i) \equiv \tilde{g}_1(Z_i) - \tilde{g}_0(Z_i) + (Y_i - \tilde{g}_{T_i}(X))\tilde{H}_i  
\label{eq: double robust label}
\end{equation}

We then run an OLS regression of $c_{\tilde{\eta}}(Z_i)$ on $(1, \tilde{\theta}(Z_i))$ to obtain coefficients $\tilde{a}_1$ and $\tilde{b}_1$. Finally, we use
\begin{equation}
\label{eq: cate generate}
\tilde{a}_1 + \tilde{b}_1\tilde{\theta}(Z_i)      
\end{equation}

as an estimate of $E[\theta(Z_i) \mid \tilde{\theta}(Z_i)]$. Please see Appendix \ref{sec:appendix} for details of this CATE estimate. 
\subsection{Algorithm}
In real-world scenarios, our initial data input consists only of ${(Y_i, D_i, Z_i)}_{i \in N}$, without access to $(\tilde{g}_0(\cdot), \tilde{g}_1(\cdot), \tilde{\mu}(\cdot))$. While we can train models on one subset of the data and use them to make predictions on another, this approach may lead to limited coverage. To address this issue, we employ cross-fitting, which allows for more reliable estimates by rotating the roles of training and estimation across data folds.
\begin{algorithm}[ht]
\caption{Estimate ATE, ATET, GATE and CATE via Cross-Fitting}
\label{alg:cate_estimation}

\KwIn{Observational data $\{(Y_i, D_i, Z_i)\}_{i \in N}$, number of folds $K$}
\KwOut{ATE, ATET, GATE on the total population and CATE estimates for each $i \in N$.}

Partition the data into $K$ equal-sized folds: $\bigcup_{k=1}^K I_k$\;
ScoreList = [\quad]\;
\For{$k = 1$ \KwTo $K$}{
    Let $I_{-k} := \bigcup_{j \notin I_k} I_j$\;

    Train ML models on $I_{-k}$ to obtain $\tilde{\eta}_k := (\tilde{g}_1^k (\cdot), \tilde{g}_0^k(\cdot), \tilde{\mu}^k(\cdot))$\;

    \ForEach{$i \in I_k$}{
        Compute $c_{\tilde{\eta}_k}(Z_i)$ as defined in Equation~\eqref{eq: double robust label}\;
        
        Compute $\tilde{\theta}_k(Z_i)$ \;

        ScoreList.append($(\tilde{g}_1^k (Z_i), \tilde{g}_0^k(Z_i), \tilde{\mu}^k(Z_i))$)\;
    }

    Computer $E_k[\tilde{\theta}_k(Z_i)]$ , Regress $c_{\tilde{\eta}_k}(Z_i)$ on $(1, \tilde{\theta}_k(Z_i) - E_k[\tilde{\theta}_k(Z_i)])$ to obtain $(\tilde{a}_1^k, \tilde{b}_1^k)$ Apply $(\tilde{a}_1^k, \tilde{b}_1^k)$ to estimate CATE for $i \in I_k$ by equation \eqref{eq: cate generate}\;
}
Use ScoreList to calculate the $\tilde{\theta}_{ATE}$, $\tilde{\theta}_{ATET}$ and $\tilde{\theta}_{G}$ based on their estimator. 
\end{algorithm}

\section{Experiment Design}
\label{sec: experiment}
In our experiment, we apply the proposed framework from Section~\ref{sec: causal estimation} to estimate the causal impact of an anti-counterfeit campaign program. For each $i \in N$, we observe two types of information that characterize the product: $Z_i^{\text{text}}$ and $Z_i^{\text{num}}$. The former refers to the unstructured textual description of the product and the latter represents structured tabular data for the same product. We train separate models using the text features and numerical features, respectively, and obtain causal effect estimates from each. We then compare the results across the two modalities to assess their consistency at the population, product group, and individual levels.

\subsection{Data}

Within the overall product population, approximately 280,000 products are enrolled in the program (treatment group), while 150,000 are not (control group). The control group is constructed using a matching method following \citet{lanners2023variable}\footnote{The matching process ensures that the positivity assumption, $0<P(T = 1 | Z) < 1$, holds\cite{rosenbaum1983central}}, identifying products from the broader product space that are similar to those in the treatment group.

Adopting the notation from the previous section, the total population is represented by ${(Y_i, T_i, Z_i)}_{i \in N}$. Here, $Y_i \in (0,1)$ denotes the complaint rate for product $i$ within a specified time window following the initiation of the program. The treatment indicator $T_i = 1$ indicates that the product is enrolled in the program, while $T_i = 0$ indicates it is not. Since the program is designed to reduce counterfeit incidents, we expect it to lower the complaint rate for products enrolled in the program. For each $i \in N$, we observe two types of information $Z_i$ that characterize the product. In particular, $Z_i^{\text{text}}$ represents the unstructured textual description of the product, typically composed of sentences or short phrases without a standardized format. See Figure~\ref{fig:synthetic_example} for examples of two synthetic products.\footnote{The example shown above is synthetically generated for illustrative purposes. It follows the same pattern as the real training data (e.g., short description, bullet points), the content itself is entirely fictional and does not correspond to any actual product.} On the other hand, $Z_i^{\text{num}}$ represents tabular data comprising a product’s historical records, including sales, reviews, popularity, pricing, complaints, and other attributes—totaling over 150 features. These numerical and categorical features are selected based on expert insights and engineered by data professionals, and can be regarded as informative variables.

\begin{figure}[H]  % or [htbp] if you want LaTeX to decide placement
    \centering
    \includegraphics[width=5cm, height = 5cm]{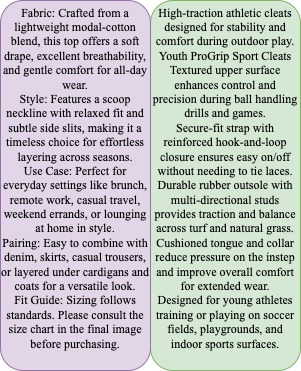}
    \caption{Illustrative example of unstructured texts}
    \label{fig:synthetic_example}
\end{figure}

\subsection{Model Training}
For each $k \in \{1,\dots, K\}$, we train $\tilde{\eta}_k^{text} := (\tilde{g}_1^{k, text} (\cdot), \tilde{g}_0^{k, text}(\cdot), \tilde{\mu}^{k,text}(\cdot))$ and $\tilde{\eta}_k^{num} := (\tilde{g}_1^{k, num} (\cdot), \tilde{g}_0^{k, num}(\cdot), \tilde{\mu}^{k,num}(\cdot))$ on $I_{-k}$ based on $Z_k^{text}$ and $Z_k^{num}$ separately.  

Based on the dataset ${(Y_i, T_i, Z_i^{\text{text}})}_{i \in I_{-k}}$, we employ a single transformer-based model to simultaneously generate
\[
(\tilde{g}_1^{k,\text{text}}(Z_i^{text}), \tilde{g}_0^{k,\text{text}}(Z_i^{text}), \tilde{\mu}^{k,\text{text}}(Z_i^{text})) 
\]
for each $i$. To address this multi-output regression task, we use a pre-trained decoder-only language model and apply supervised fine-tuning. We opt for a decoder-only architecture, rather than an encoder or encoder-decoder model, due to its higher data efficiency and superior scalability in training, as demonstrated in \citet{yang2024harnessing}, \citet{touvron2023llama}, and \citet{jiang2023mistral}. Additionally, we adopt the Mistral-3B (Mini Mistral) model\footnote{The Mistral 3B model used in this research is licensed under the Apache License 2.0. The full license text can be found at https://choosealicense.com/licenses/apache-2.0/}—a distilled version of the latest Mistral-7B(\citet{jiang2023mistral}) \footnote{We initially aimed to use a 7B model for regression, but training it on 40GB A100s—even with LoRA—requires FSDP, adding complexity and overhead. Instead, we used Mistral’s 3B distilled model, which trains efficiently with DDP and offers performance broadly comparable to the 7B version, enabling faster experimentation while maintaining accuracy.}—as our predictive backbone. We extract the last hidden state of the last effective token from the input sequence in $Z_i^{\text{text}}$ and pass it through an autoencoder to perform the multi-target regression task. We backpropagate through both the autoencoder and the Mistral model, applying LoRA to fine-tune 1\% of the Mistral model’s parameters. An additional reason we use a mainstream pre-trained language model instead of building a transformer-based model from scratch is our belief that the model’s embedded world knowledge can facilitate transfer learning for our causal task. Specifically, product descriptions often convey information about a product’s popularity and its vulnerability to counterfeiting. We aim to leverage this implicit knowledge through transfer learning and fine-tune the model to perform causal inference in our setting. The loss function for the multi-label regression is:

{\small
\begin{align*}
    &\text{Loss}(Z_i^{text}) = \\
    &\quad\frac{\sqrt{T_i\left(\tilde{g}_1^{k,\text{text}}(Z_i^{text}) - Y_i\right)^2}}{E_k[Y_m]} + \\
    &\quad\frac{\sqrt{(1-T_i)\left(\tilde{g}_0^{k,\text{text}}(Z_i^{text}) - Y_i\right)^2}}{E_k[Y_m]} +\\
    &\quad \lambda \cdot BCE(\tilde{\mu}^{k,\text{text}}(Z_i^{text}), T_i)
\end{align*}
}

 Given an individual product $i \in I_{-k}$, if it is treated, we compute the loss only between the predicted outcome under treatment and the observed outcome. Conversely, if it is not treated, we compute the loss only between the predicted outcome without treatment and the observed outcome. The final term is the standard binary cross-entropy loss used to train the classifier for predicting the probability of treatment. The parameter $\lambda$ serves as a weighting factor to balance the loss between the regression and classification tasks.\footnote{The reason we use a single model to predict both potential outcomes and the propensity score is to leverage the LLM’s multitask capability and to avoid the computational cost of training separate models for each prediction.}

For tabular data, the training logic is straightforward. We use the subset $\{(Z_i^{\text{num}}, Y_i)\}_{i \in I{-k}, T_i = 1}$ to train the treated outcome model $\tilde{g}_1^{k, \text{num}}(\cdot)$, and $\{(Z_i^{\text{num}}, Y_i)\}_{i \in I_{-k}, T_i = 0}$ to train the control outcome model $\tilde{g}_0^{k, \text{num}}(\cdot)$. The propensity score model $\tilde{\mu}^{k, \text{num}}(\cdot)$ is trained on $\{(Z_i^{\text{num}}, T_i)\}_{i \in I_{-k}}$. We consider various candidate models\footnote{We try various models, including Lasso, Elastic Net, Gradient Boost, Random Forest, and Deep Neural Nets, and display the best four in this paper.} for tabular data and the final model is selected based on performance of $\tilde{g}_1^{k, \text{num}}$ and $\tilde{g}_0^{k, \text{num}}$.

\section{Results}
\label{sec: result}
For the set of numerical models $\{(\tilde{g}_1^{k,\text{num}}, \tilde{g}_0^{k,\text{num}})\}_{k \in K}$ trained on different learners, gradient boosting demonstrates the best overall performance. Table~\ref{table: numerical comparison} summarizes the comparative results, with the definitions of the reported metrics provided in Appendix~\ref{app: metric}. As shown in the table, gradient boosting achieves the highest out-of-sample correlation in both the treatment and control groups, along with the lowest MAPE. Therefore, we adopt gradient boosting as the predictive backbone for conducting causal inference using numerical data.
\begin{table}[H]
  \centering
  \small
  \begin{tabular}{lccc}
    \hline
    \textbf{Method} & \textbf{Corr T} & \textbf{Corr C} & \textbf{MAPE} \\
    \hline
    Gradient Boost & 0.75  & 0.70  & 0.47 \\
    Random Forest     & 0.73  & 0.68  & 0.50 \\
    Lasso             & 0.58  & 0.45  & 0.67 \\
    Elastic Net       & 0.58  & 0.46 & 0.66 \\
    \hline
  \end{tabular}
  \caption{Comparison: Model Performance}
\label{table: numerical comparison}
\end{table}
For the text data, LLM achieve 0.64 out-of-sample correlation in treatment group and 0.58 in control group and overall MAPE is 0.55. Figure~\ref{fig:dml_llm_2x2} in Appendix~\ref{app: control treat graph} presents the distributions of predicted and actual outcomes for both the treatment and control groups, using Gradient Boosting and the LLM-based models, respectively.

As shown in Table \ref{table:ATE} and Figure \ref{fig:boxplot_ate_att} (all numbers in Table \ref{table:ATE} and Figure \ref{fig:boxplot_ate_att} are \%), ATE and ATT achieved using text data and numeric data are highly similar with confidence intervals largely overlapped.

\begin{table}[h]
  \centering
  \small
  \begin{tabular}{lcccc}
    \hline
      & \textbf{ATE} & \textbf{ATE CI} & \textbf{ATT} & \textbf{ATT CI} \\
    \hline
     Num     & -0.94  & [-1, -0.9]  & -1.1 & [-1.2, -0.9]  \\
     Text     & -1.1   &  [-1.2, -0.9] & -1.2 & [-1.4, -1]  \\\hline
  \end{tabular}
  \caption{Comparison: ATE \& ATT}
  \label{table:ATE}
\end{table}

\begin{figure}[h]
  \centering
  \includegraphics[width=0.48\textwidth]{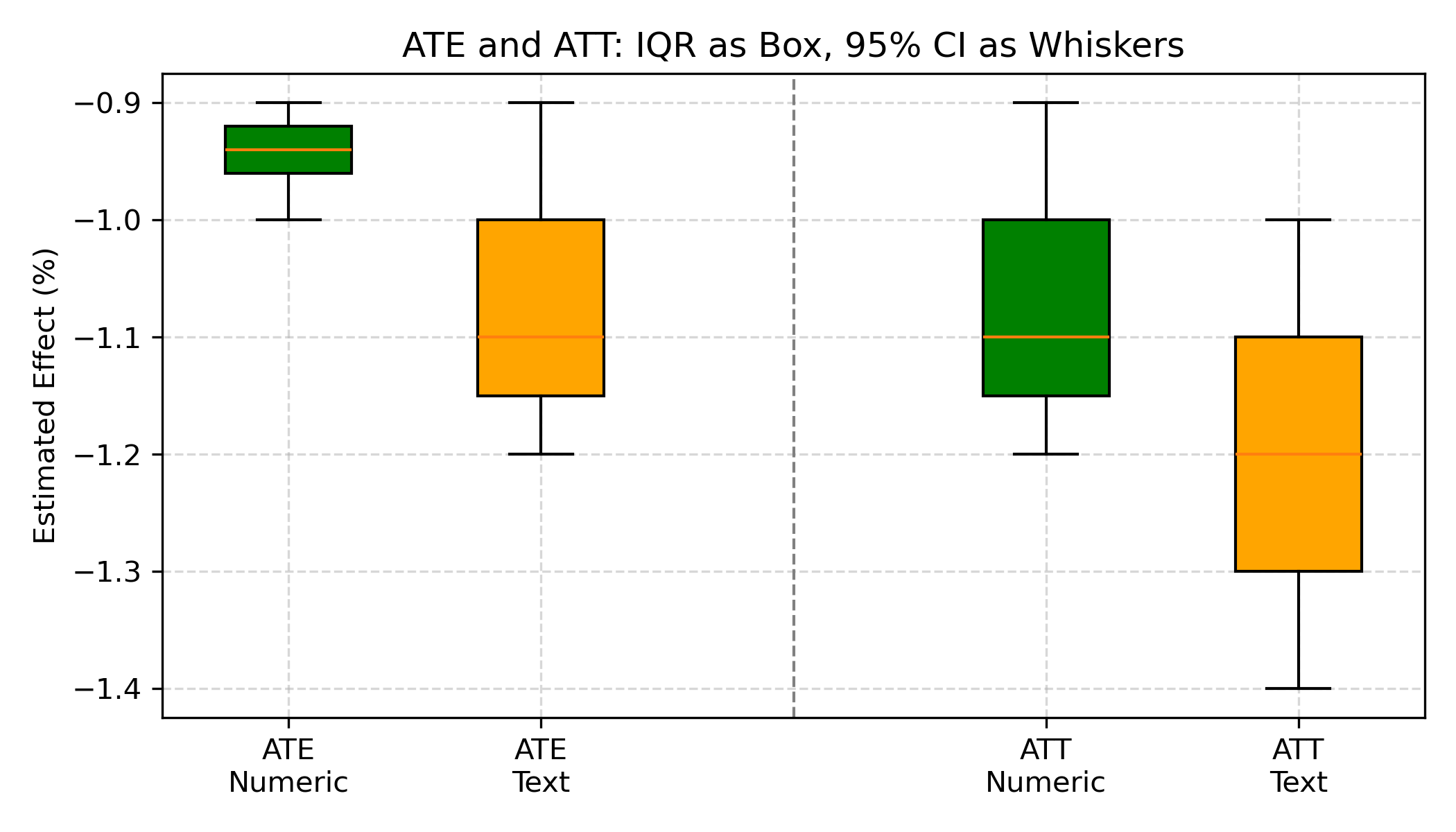}
  \caption{ATE \& ATT Comparison: Box plots}
  \label{fig:boxplot_ate_att}
\end{figure}

We take a further step to compare the Group Average Treatment Effects (GATEs) across different product groups. We find that estimates derived from text data are highly consistent with those from numeric data, with Pearson correlation of 0.92 and Spearman correlation of 0.78. As shown in Table~\ref{table:complaint_reduction_sorted}, rankings of product groups by average complaint rate reduction from numeric/text data are consistent. Figure~\ref{fig-group} in Appendix \ref{app: GATE plots} visualizes the GATEs by product group, illustrating a strong alignment between the two sets of estimates.

\begin{table}[h]
  \centering

  \small
  \begin{tabular}{@{}lcc@{}}
    \toprule
    \textbf{Group} & \textbf{Numeric Rank} & \textbf{Text Rank} \\
    \midrule
    Group\_A           & 1  & 1  \\
    Group\_B            & 2  & 2  \\
    Group\_C           & 3  & 4  \\
    Group\_D         & 4  & 3  \\
    Group\_E               & 5  & 8  \\
    Group\_F           & 6  & 5  \\
    Group\_G          & 7  & 12 \\
    Group\_H             & 8  & 15 \\
    Group\_I              & 9  & 9  \\
    Group\_J              & 10 & 7  \\
    Group\_K               & 11 & 11 \\
    Group\_L            & 12 & 14 \\
    Group\_M            & 13 & 10 \\
    Group\_N        & 14 & 19 \\
    Group\_O      & 15 & 13 \\
    Group\_P      & 16 & 20 \\
    Group\_Q          & 17 & 6  \\
    Group\_R              & 18 & 16 \\
    Group\_S       & 19 & 17 \\
    Group\_T    & 20 & 18 \\
    \bottomrule
  \end{tabular}
    \caption{Product Group Rankings}
  \label{table:complaint_reduction_sorted}
\end{table}

Lastly, we compare individual-level CATEs estimated from textual and numerical data and find that the two sets of estimates are also highly correlated, with Pearson correlation of 0.60 and Spearman correlation of 0.68. In Figure~\ref{fig-HTE_Curve}, we rank all 430,000 products from lowest to highest based on their estimated CATEs from text and numeric data ($y$-axis represents complaint rate reduction). This figure visualizes the quantile distribution of the two sets of CATEs and the two quantile curves are well aligned, indicating strong consistency between the text-based and numeric-based estimates.

\begin{figure}[h]
  \centering
  \includegraphics[width=\columnwidth]{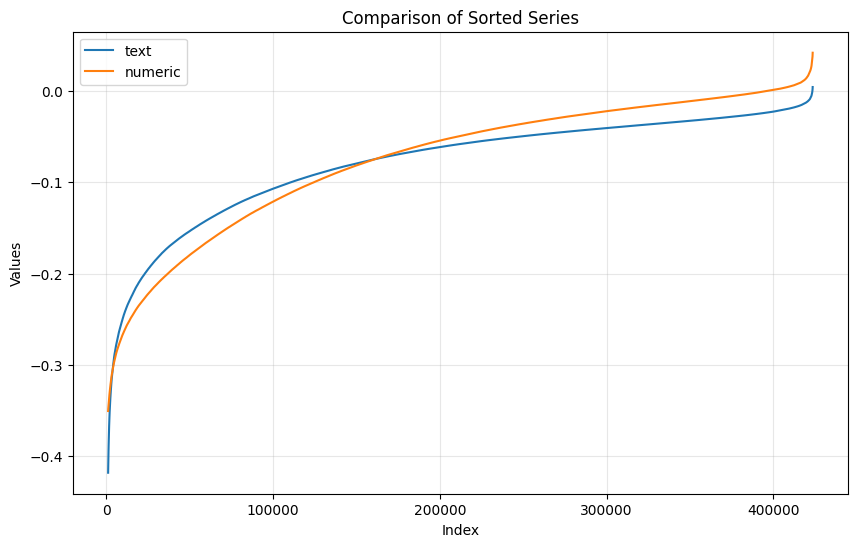}
  \caption{CATE Comparison: CATE Curves}
  \label{fig-HTE_Curve}
\end{figure}

Additionally, we evaluate the CATE estimated from text data as a recommendation score. Using the CATE derived from numerical data as the ground truth, we construct a lift curve based on both estimates. In Figure \ref{fig-Lift_Curve}, the red curve ranks all products from high to low using the text-based CATE, while the $y$-axis represents the cumulative gain computed using the numeric-based CATE. The blue curve, by contrast, uses the numeric-based CATE for both sorting and cumulative gain calculation—representing the optimal lift curve achievable. We find that the area under the red curve exceeds 84\% of the area under the blue curve, indicating that the text-derived CATE is highly effective as a recommendation score.

\begin{figure}[h]
  \centering
  \includegraphics[width=0.8\columnwidth]{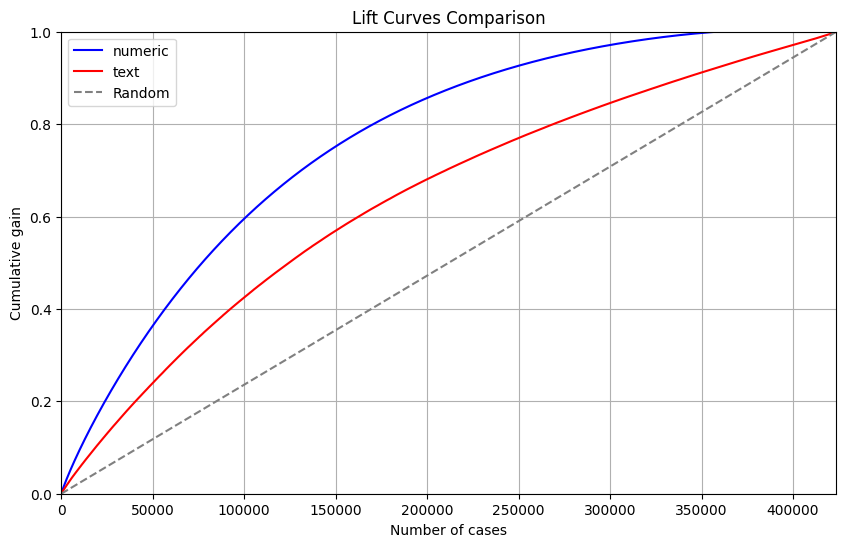}
  \caption{CATE Comparison: Lift Curves}
  \label{fig-Lift_Curve}
\end{figure}

From the above, we conclude that the LLM is able to estimate causal impacts from unstructured text data that are comparable to those derived from structured numeric data.

\section{Conclusion \& Extension}
\label{sec: conclu}

In this work, we present a framework to leverage transformer-based language models for causal inference using unstructured text. Our approach extends the applicability of causal inference methods to scenarios where only textual data is available and we demonstrate the effectiveness of our framework by comparing causal estimates derived from unstructured text against those obtained from tabular data. Our findings show consistent results between the two approaches, validating the potential of unstructured text in causal inference tasks.

Our proposed method currently applies to the single binary treatment setting. In real-world practice, however, treatment assignment can be continuous (e.g., treatment intensity) rather than binary, and in some cases individuals may receive multiple treatments at the same or staggered times. Extending our framework to continuous or multi-treatment scenarios is a meaningful direction for future research. Another promising direction is multi-modal training. In this paper, we show that text-based inputs alone can achieve prediction and estimation performance comparable to that obtained using structured numerical data. However, it remains an open question whether—and how—jointly training on both textual and numerical data can further improve predictive accuracy and yield estimates with higher confidence and greater statistical power. 

\newpage 
\bibliography{custom}

\appendix

\section{Derivation of CATE}
\label{sec:appendix}
In the note, the estimands of interest is \textbf{Best Linear Predictor of CATE} (BLP) from (\cite{chernozhukov2018generic}). Given ML proxy $\tilde{g}_1(\cdot), \tilde{g}_0(\cdot)$ and $\tilde{\mu}(\cdot)$ for $g_1(\cdot), g_0(\cdot)$ and $\mu(\cdot)$ and $(Y,T,Z)$. We interested in:
\begin{definition}(BLP)
    The best linear predictor of $\theta(Z) \equiv g_1(Z) - g_0(Z)$ by $\tilde{\theta}(Z):= \Tilde{g}_1(Z) - \Tilde{g}_0(Z)$ is the solution to: 
    \[
    \min_{b_1, b_2} E[\theta(Z) - b_1 - b_2\tilde{\theta}(Z)]^2
    \]
    which if exists, is defined as
    \[
    BLP[\theta(Z)|\tilde{\theta}(Z)] := \beta_1 + \beta_2(\tilde{\theta}(Z) - E[\tilde{\theta}(Z)])
    \]
    where $\beta_1 = E[\tilde{\theta}(Z)]$ and $\beta_2 = \frac{Cov(\theta(Z), \tilde{\theta}(Z))}{Var[\tilde{\theta}(Z)]}$
\end{definition}

In observation data setting, directly regress Horvitz-Thompson transformation $HY$, where $H = \frac{T - \mu(Z)}{\mu(Z)(1-\mu(Z))}$, on $\tilde{\theta}(Z)$ is not feasible since we do not know the true value $\mu(Z)$. Moreover, if we replace $\mu(\cdot)$ with $\Tilde{\mu}(\cdot)$ and regress $\Tilde{H}Y$, where $\Tilde{H} = \frac{T - \Tilde{\mu}(Z)}{\Tilde{\mu}(Z)(1-\Tilde{\mu}(Z))}$, on $\tilde{\theta}(Z)$. The bias of estimator will solely depends on $\Tilde{\mu}(Z)$. As an alternative, we propose another identification strategy follow double robust spirit and show its good property.

\textbf{Strategy: Double Robust Label:}
Given $(Y, T, Z)$ with probability law $P$ and ML proxy $\Tilde{\eta} \equiv (\Tilde{g}_1(\cdot), \Tilde{g}_0(\cdot), \Tilde{\mu}(\cdot))$, construct the following label (random variable):
\[
c(\Tilde{\eta}) = \Tilde{H} (Y - \Tilde{g}_T(Z)) + \Tilde{g}_1(Z) - \Tilde{g}_0(Z)
\]
\[
\tilde{H} = \frac{T}{\Tilde{\mu}(Z)} - \frac{1-T}{1 - \tilde{\mu}(Z)}
\]

\begin{theorem}
\label{theorem 1}
    Given $\tilde{\eta}$ with assuming $\tilde{g}(1, Z)$ and $\tilde{g}(0, Z)$ have finite second moment and do not equal to each other almost surely. Then $(a_1, b_1)$ by considering following linear projection, 
    \begin{equation}
            c(\tilde{\eta}) = a_1 + b_1(\tilde{\theta}(Z) - E[\tilde{\theta}(Z)]) + \epsilon
    \label{linear project}
    \end{equation}
    with $\; E[\epsilon \tilde{X}] = 0\;, \tilde{X} = [1, \tilde{\theta}(Z) - E[\tilde{\theta}(Z)]]'$\\
    has the following expression:
    \[
    a_1 = \beta_1 + E[bias_1] + E[bias_2]
    \]
    \[
    b_1 = \beta_2 + \frac{cov(bias_1, \tilde{\theta}(Z))}{Var(\tilde{\theta}(Z))} + \frac{cov(bias_2, \tilde{\theta}(Z))}{Var(\tilde{\theta}(Z))}
    \]
    With $bias_1$ and $bias_2$ are two random variable:
    \[
    bias_1 = (\frac{\mu_(Z)}{\tilde{\mu}(Z)} - 1)(g_1(Z) - \tilde{g}_1(Z))
    \]
    \[
    bias_2 = (1 - \frac{ 1 - \mu_(Z)}{1 - \tilde{\mu}(Z)})(g_0(Z) - \tilde{g}_0(Z))
    \]
\end{theorem}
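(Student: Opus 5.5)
The plan is to reduce everything to the conditional mean of the doubly robust label given $Z$, and then read off the population OLS coefficients with a centered regressor. Because the regressor $\tilde{X} = [1, \tilde{\theta}(Z) - E[\tilde{\theta}(Z)]]'$ is a function of $Z$ alone, the tower property gives $E[c(\tilde{\eta})\tilde{X}] = E\bigl[E[c(\tilde{\eta})\mid Z]\,\tilde{X}\bigr]$. So the projection of $c(\tilde{\eta})$ in \eqref{linear project} coincides with the projection of $m(Z) \equiv E[c(\tilde{\eta})\mid Z]$. Since the second regressor is centered, the normal equations decouple:
\[
a_1 = E[c(\tilde{\eta})], \qquad b_1 = \frac{Cov(c(\tilde{\eta}), \tilde{\theta}(Z))}{Var(\tilde{\theta}(Z))} = \frac{Cov(m(Z), \tilde{\theta}(Z))}{Var(\tilde{\theta}(Z))}.
\]
This requires $Var(\tilde{\theta}(Z))>0$, which I read as the content of the assumption that $\tilde{g}(1,Z)$ and $\tilde{g}(0,Z)$ are not a.s.\ equal; strictly, non-degeneracy of $\tilde{\theta}(Z)$ is what is needed. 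It also requires finite second moments of $c(\tilde{\eta})$ and $\tilde{\theta}(Z)$, for which I will also assume $\tilde{\mu}(Z)$ is bounded away from $0$ and $1$.

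The key computation is $m(Z)$. By \eqref{eq: irm 1}, unconfoundedness and $T\in\{0,1\}$, we have $E[T(Y-\tilde{g}_1(Z))\mid Z] = \mu(Z)(g_1(Z)-\tilde{g}_1(Z))$ and $E[(1-T)(Y-\tilde{g}_0(Z))\mid Z] = (1-\mu(Z))(g_0(Z)-\tilde{g}_0(Z))$. These use $TY = Tg_1(Z) + TU$ and $E[U\mid Z,T]=0$. Hence
\[
m(Z) = \tilde{g}_1 - \tilde{g}_0 + \frac{\mu}{\tilde{\mu}}(g_1-\tilde{g}_1) - \frac{1-\mu}{1-\tilde{\mu}}(g_0-\tilde{g}_0).
\]
Adding and subtracting $g_1 - g_0$, I regroup $\tilde{g}_1 + \frac{\mu}{\tilde{\mu}}(g_1-\tilde{g}_1) = g_1 + (\frac{\mu}{\tilde{\mu}}-1)(g_1-\tilde{g}_1)$. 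Similarly, $-\tilde{g}_0 - \frac{1-\mu}{1-\tilde{\mu}}(g_0-\tilde{g}_0) = -g_0 + (1-\frac{1-\mu}{1-\tilde{\mu}})(g_0-\tilde{g}_0)$. This yields the identity
\[
m(Z) = \theta(Z) + bias_1 + bias_2 .
\]

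Plugging this into the two moment formulas and using linearity of expectation and covariance gives:
\begin{itemize}
\item $a_1 = E[\theta(Z)] + E[bias_1] + E[bias_2]$;
\item $b_1 = \frac{Cov(\theta(Z),\tilde{\theta}(Z))}{Var(\tilde{\theta}(Z))} + \frac{Cov(bias_1,\tilde{\theta}(Z))}{Var(\tilde{\theta}(Z))} + \frac{Cov(bias_2,\tilde{\theta}(Z))}{Var(\tilde{\theta}(Z))}$, whose first term is $\beta_2$.
\end{itemize}
For the intercept, I will note that the BLP with centered regressor has intercept equal to the mean of the target, namely $E[\theta(Z)]$. So the $\beta_1$ in the statement must be read as $E[\theta(Z)]$; the displayed $E[\tilde{\theta}(Z)]$ in the Definition appears to be a typo, and the proof will make this explicit.

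The algebra is routine. The main obstacle is the integrability and regularity bookkeeping. Justifying the tower step and the finiteness of all covariances needs square-integrability of $c(\tilde{\eta})$. That in turn needs $Y$ square-integrable, the finite second moments of $\tilde{g}_0,\tilde{g}_1$ from the hypothesis, and $1/\tilde{\mu}$ and $1/(1-\tilde{\mu})$ bounded, i.e.\ an overlap condition on the proxy propensity. I would state these as standing assumptions before the computation.
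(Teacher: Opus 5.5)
Your proposal is correct and is essentially the paper's argument. The paper likewise writes $c(\tilde{\eta})=\theta(Z)+\bigl(\frac{T}{\tilde{\mu}(Z)}-1\bigr)(g_1-\tilde{g}_1)+\bigl(1-\frac{1-T}{1-\tilde{\mu}(Z)}\bigr)(g_0-\tilde{g}_0)+\tilde{H}U$, drops $\tilde{H}U$ via $E[U\mid T,Z]=0$, replaces $T$ by $\mu(Z)$ because $\tilde{X}$ depends only on $Z$, and reads $(a_1,b_1)$ off the normal equations that the centering decouples; you simply condition on $Z$ before regrouping rather than after.

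Your side remarks are correct refinements that the paper leaves implicit. The proof actually delivers intercept $E[\theta(Z)]$, so $\beta_1=E[\tilde{\theta}(Z)]$ in the Definition is a typo. What is really needed is $Var(\tilde{\theta}(Z))>0$ together with overlap for $\tilde{\mu}$. You should also add $g_0,g_1\in L^2$ to your standing assumptions, so that $\beta_2$ and each bias covariance are finite separately.
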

\begin{proof}
    The Proof is straightforward once we decompose the expression of $c(\tilde{\eta})$ \footnote{In the proof, we will use $g(T, X)$ and $g_T(X)$, as well as $\tilde{g}(T, X)$ and $\tilde{g}_T(X)$, interchangeably for convenience.}.
    \begin{align*}
        c(\Tilde{\eta}) &= \Tilde{H} (Y - \Tilde{g}(T,Z)) + \Tilde{g}(1,Z) - \Tilde{g}(0,Z)\\
        &= (\frac{T}{\tilde{\mu}(Z)} - \frac{1-T}{1 - \tilde{\mu}(Z)}) (Y - \Tilde{g}(T,Z)) + \\
        &\quad \Tilde{g}(1,Z) - \Tilde{g}(0,Z)\\
        &= (\frac{T}{\tilde{\mu}(Z)} - \frac{1-T}{1 - \tilde{\mu}(Z)})(g(T,Z)\\
        &\quad -\tilde{g}(T,Z)) - (g(1,Z) - g(0,Z))\\
        &\quad + (g(1,Z) - g(0,Z)) \\
        & \quad + (\tilde{g}(1,Z) - \tilde{g}(0,Z)) + \tilde{H}U\\
        & = (g(1,Z) - g(0,Z)) - (g(1,Z)\\
        & \quad - \tilde{g}(1,Z)) + (g(0,Z) - \tilde{g}(0,Z)) \\
        & \quad + (\frac{T}{\tilde
        \mu(Z)} \quad- \frac{1-T}{1-\tilde
        \mu(Z)})(g(T,Z) \\
        &\quad - \tilde{g}(T,Z)) + \tilde{H}U\\
        &= s_0(Z) + (\frac{T}{\tilde{\mu}(Z)} - 1)(g(1,Z) - \tilde{g}(1,Z))\\
        &\quad + (1 - \frac{1-T}{1 - \tilde{\mu}(Z)})(g(0,Z)\\
        & \quad - \tilde{g}(0,Z)) + \tilde{H}U
    \end{align*}
    By letting $a = (a_1, b_1)$, the normal equation(first order condition) for regression formula \ref{linear project} is 
    \[
    E[(c(\tilde{\eta}) - a'\tilde{X})\tilde{X}] = 0
    \]
    and since $E[U|T,Z] = 0$, we can omit the $\tilde{H}U$ in the normal equation
    \begin{align*}
        &E[(s_0(Z) + (\frac{E[T|Z]}{\tilde{\mu}(Z)} - 1)(g(1,Z) - \tilde{g}(1,Z)) + \\
        &\quad (1 - \frac{1-E[T|Z]}{1 - \tilde{\mu}(Z)})(g(0,Z) - \tilde{g}(0,Z)) - a'\tilde{X})'\Tilde{X}] \\
        &\quad = 0
    \end{align*}
    since $1$ and $\tilde{\theta}(Z) - E[\tilde{\theta}(Z)]$ are orthogonal, we will get the solution in theorem. 
\end{proof}
\begin{remark}
    From theorem \ref{theorem 1}, we can see the $bias_1$ and $bias_2$ have double robust property meaning either potential outcome function or propensity score has the correct form, $a_1, b_1$ are equal to $\beta_1, \beta_2$. Moreover, even they are both mis-specified, for example, $(\tilde{g}(T,Z) - g(T,Z)) \sim O(N^{-\frac{1}{4}}) $ and $(\mu(X) - \tilde{\mu}(X)) \sim O(N^{-\frac{1}{4}})$. $bias_1$ and $bias_2$ will have higher convergence rate $O(N^{-\frac{1}{2}})$ which achieve higher efficiency. 
\end{remark}
\section{Metric definition for Table \ref{table: numerical comparison}}
\label{app: metric}
For different method, we compare the correlation between 
\[
\bigcup_{k \in K}\{ \tilde{g}_1^{k}(Z_i)|i \in I_k, T_i = 1\}
\]
and 
\[
\bigcup_{k \in K}\{ Y_i|i \in I_k, T_i = 1\}
\]
We also compare the correlation between 
\[
\bigcup_{k \in K}\{ \tilde{g}_0^{k}(Z_i)|i \in I_k, T_i = 0\}
\]
and 
\[
\bigcup_{k \in K}\{ Y_i|i \in I_k, T_i = 0\}
\]
And also the mean average percentage error between
\[
\bigcup_{k \in K}\{ \tilde{g}_{T_i}^{k}(Z_i)|i \in I_k\}
\]
and 
\[
\bigcup_{k \in K}\{ Y_i|i \in I_k\}
\]
We name the first two correlations as \textbf{Treatment Group Correlation(Corr T)} and \textbf{Control Group Correlation(Corr C)}. Since both $\tilde{g}_{1}^{k}(\cdot)$ and $\tilde{g}_{0}^{k}(\cdot)$ are trained on $I_{-k}$ and test on $I_k$, so the correlations defined above represent the out of sample performance. 

\section{Graph for Corr T and Corr C}
\label{app: control treat graph}

\begin{figure}[ht]
  \centering
  \begin{minipage}{0.49\linewidth}
    \centering
    \includegraphics[width=\linewidth]{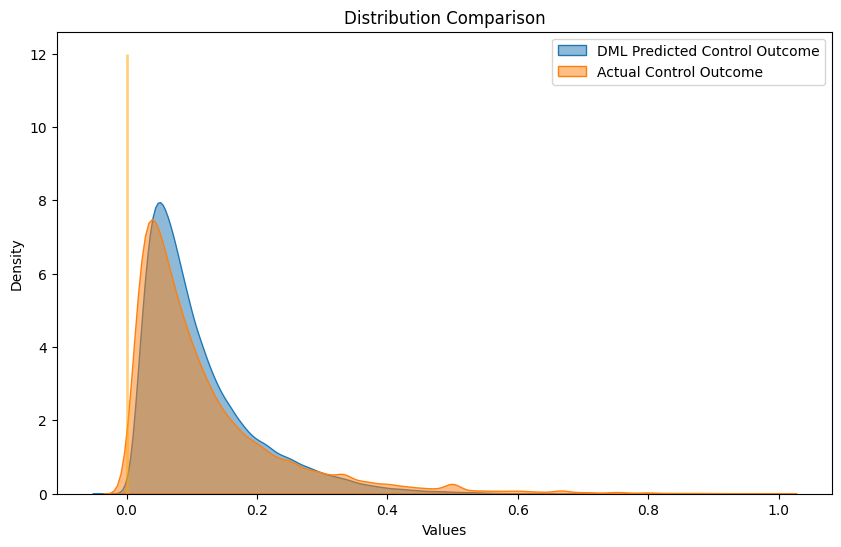}
    \subcaption{Gradient Boost Control}
    \label{subfig: a}
  \end{minipage}
  \hfill
  \begin{minipage}{0.49\linewidth}
    \centering
    \includegraphics[width=\linewidth]{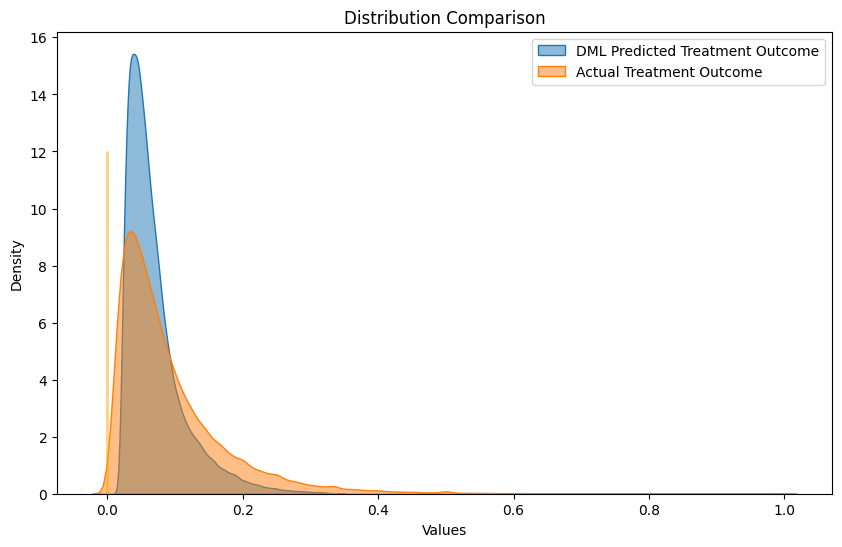}
    \subcaption{Gradient Boost Treatment}
    \label{subfig: b}
  \end{minipage}

  \vspace{0.5em}

  \begin{minipage}{0.49\linewidth}
    \centering
    \includegraphics[width=\linewidth]{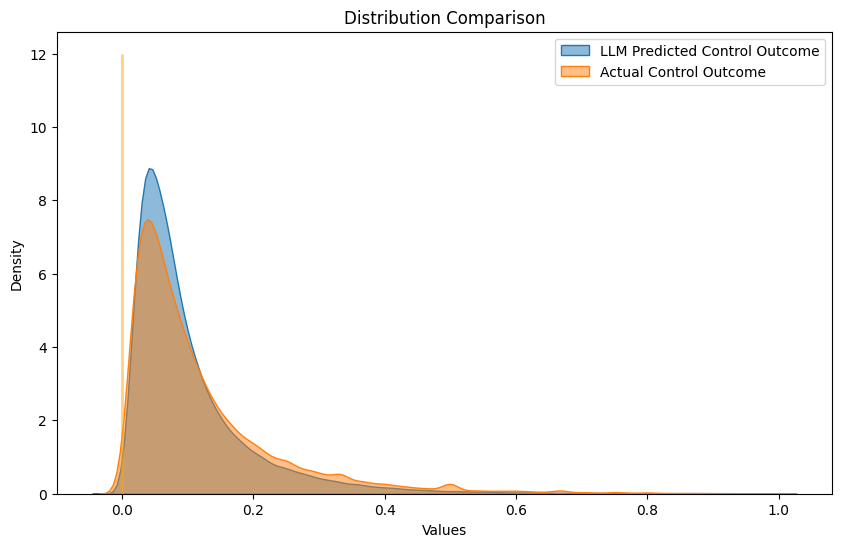}
    \subcaption{LLM Control}
    \label{subfig: c}
  \end{minipage}
  \hfill
  \begin{minipage}{0.49\linewidth}
    \centering
    \includegraphics[width=\linewidth]{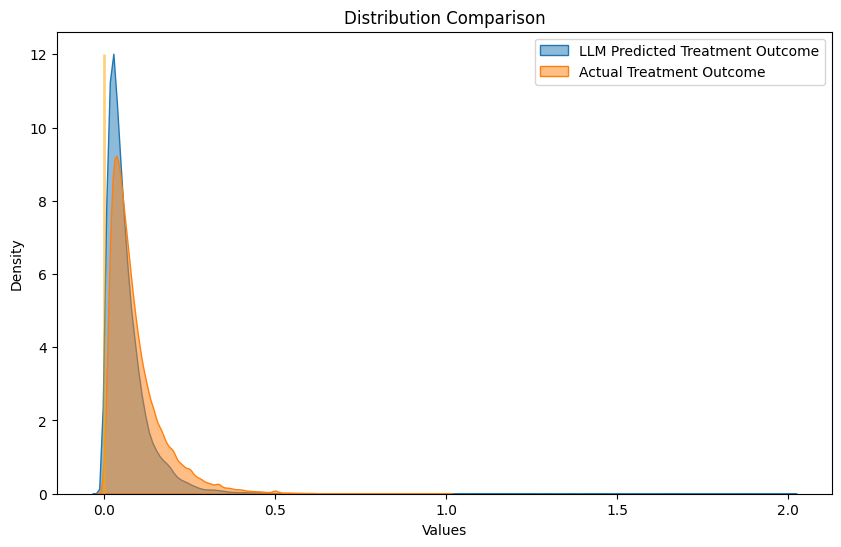}
    \subcaption{LLM Treatment}
    \label{subfig: d}
  \end{minipage}

  \caption{Gradient Boosting/LLM outcome prediction performance on control and treatment groups.}
  \label{fig: }
  \label{fig:dml_llm_2x2}
\end{figure}
Figure \ref{subfig: a} plots distribution of \[
\bigcup_{k \in K}\{ \tilde{g}_0^{k}(Z_i^{num})|i \in I_k, T_i = 0\}
\]
and 
\[
\bigcup_{k \in K}\{ Y_i|i \in I_k, T_i = 0\}
\]
Figure \ref{subfig: b} plots distribution of \[
\bigcup_{k \in K}\{ \tilde{g}_1^{k}(Z_i^{num})|i \in I_k, T_i = 1\}
\]
and 
\[
\bigcup_{k \in K}\{ Y_i|i \in I_k, T_i = 1\}
\]
Given $\tilde{g}_0^{k}(\cdot)$ and $\tilde{g}_1^{k}(\cdot)$ are trained by Gradient Boosting method. 
Figure \ref{subfig: c} plots distribution of \[
\bigcup_{k \in K}\{ \tilde{g}_0^{k}(Z_i^{text})|i \in I_k, T_i = 0\}
\]
and 
\[
\bigcup_{k \in K}\{ Y_i|i \in I_k, T_i = 0\}
\]
Figure \ref{subfig: d} plots distribution of \[
\bigcup_{k \in K}\{ \tilde{g}_1^{k}(Z_i^{text})|i \in I_k, T_i = 1\}
\]
and 
\[
\bigcup_{k \in K}\{ Y_i|i \in I_k, T_i = 1\}
\]
Given that $\tilde{g}_0^{k}(\cdot)$ and $\tilde{g}_1^{k}(\cdot)$ are trained using the LLM on unstructured text data, the graph shows that their out-of-sample performance is comparable to that of $\tilde{g}_0^{k}(\cdot)$ and $\tilde{g}_1^{k}(\cdot)$ trained with Gradient Boosting on structured numerical data.

\section{GATEs for Different Groups}
\label{app: GATE plots}

\begin{figure}[h]
  \centering
  \includegraphics[width=5.5cm, height =4cm]{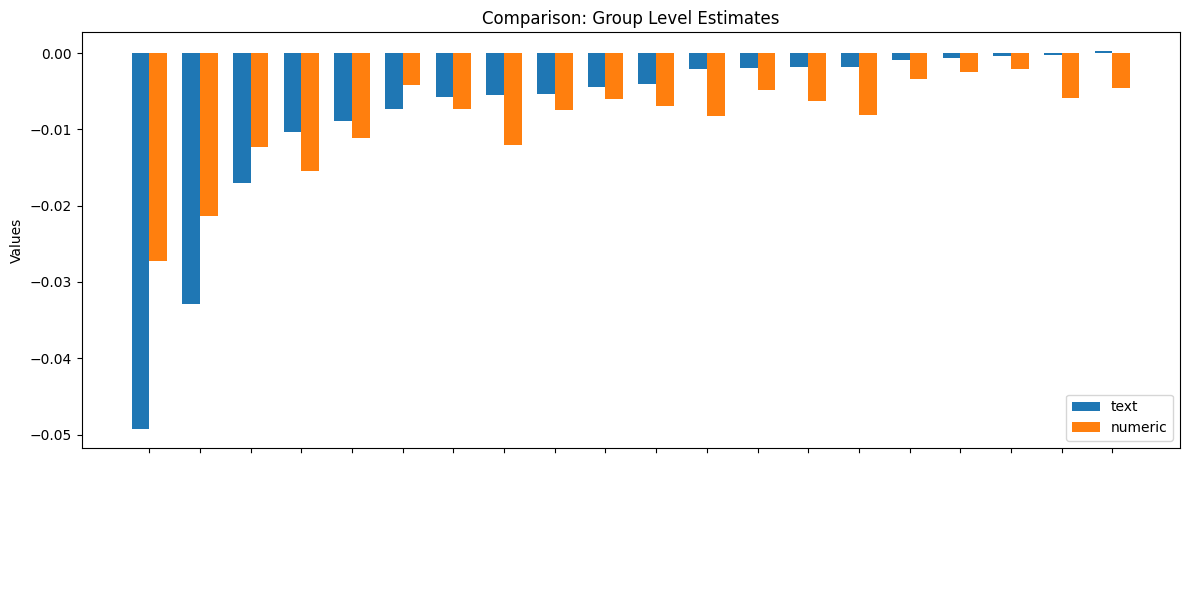}
  \caption{Group Level Comparison}
  \label{fig-group}
\end{figure}

\end{document}